\newtheorem{prob-statement}{Problem}
\newtheorem{lemma}{Lemma}
\newtheorem{property}{Property}
\newtheorem{thrm}{Theorem}
\begin{document}

\title{Towards the Design of Prospect-Theory based Human Decision Rules for Hypothesis Testing}

\author{V.~Sriram~Siddhardh~Nadendla,~\IEEEmembership{Student Member,~IEEE}, Swastik~Brahma,~\IEEEmembership{Member,~IEEE}, and~Pramod~K.~Varshney,~\IEEEmembership{Fellow,~IEEE}
\thanks{V. Sriram Siddhardh Nadendla, Swastik Brahma and Pramod K. Varshney are with the Department of Electrical Engineering and Computer Science, Syracuse University, Syracuse, NY 13201, USA. E-mail: \{vnadendl, skbrahma, varshney\}@syr.edu.}
\thanks{This work was supported in part by CASE: The Center for Advanced Systems and Engineering, a NYSTAR center for advanced technology at Syracuse University, the Department of the Army under the Cooperative Research Agreement: W911-NF-13-2-0040, and NSF grant No. 1609916.
.}}

\maketitle

\begin{abstract}
Detection rules have traditionally been designed for rational agents that minimize the Bayes risk (average decision cost). With the advent of crowd-sensing systems, there is a need to redesign binary hypothesis testing rules for behavioral agents, whose cognitive behavior is not captured by traditional utility functions such as Bayes risk. In this paper, we adopt prospect theory based models for decision makers. We consider special agent models namely optimists and pessimists in this paper, and derive optimal detection rules under different scenarios. Using an illustrative example, we also show how the decision rule of a human agent deviates from the Bayesian decision rule under various behavioral models, considered in this paper.
\end{abstract}

\begin{keywords}
Binary Hypothesis Testing, Prospect Theory, Optimists, Pessimists.
\end{keywords}

\section{Introduction \label{sec: Introduction}}
Cognitive behavior has traditionally been modeled using rationality models, where the human agents are assumed to behave in an unbiased manner. Unbiased decision-makers are often assumed to minimize \emph{Bayes risk}, which is defined as the expected cost of making decisions \cite{Book-Kay-Detection}. However, in the real world, human agents may have a cognitive bias, due to the limited availability of information and/or other complex behaviors such as emotions, loss-aversion and endowment effect \cite{Simon1979,Kahneman1979,Einhorn1981,Barberis2003,Johnson2010}. Such complex agents were successfully modeled by Kahneman and Tversky using \emph{prospect theory} in \cite{Kahneman1979}, where human behavior is modeled using weight and value functions over probabilities and costs respectively. In this paper, we derive optimal decision rules for binary hypothesis testing employed by two special prospect-theory based human agents, namely \emph{optimists} and \emph{pessimists}.

In the past, several efforts have been geared towards validating theoretical models to comprehend decision rules employed by human agents, using experimental data. Recently, researchers have been showing significant interest in the design of complex networked systems where human agents interact with machines effectively so that the system operates with maximal efficiency \cite{Coppin2003}. Particularly, in the context of binary hypothesis testing, there are several \emph{crowdsensing} based applications such as Tomnod and Zooniverse where human volunteers participate in the decision-making process of a proposed task. In some cases, systems are designed to emulate human behavior in order to reduce human effort and intervention. One example is the design of self-driving cars by Google and Uber, which move in traffic alongside human-driven vehicles. In contrast, there are other applications where there is a need to steer/nudge human decisions in order to improve the overall performance of the system \cite{Book-Thaler}. In this paper, we study optimal behavioral rules in human agents within the context of binary hypothesis testing, which is essential to propose a design framework, where human decisions can be either emulated/steered in a controlled manner in any human-machine interaction system. 

Traditionally, binary hypothesis testing has been extensively studied by researchers over several decades under different scenarios \cite{Book-Kay-Detection, Book-Varshney}. In particular, Bayesian detection rules are designed under the premise that the decision-maker is rationally motivated to minimize its Bayes risk. With the advent of novel systems which include human decision makers, there is a need to redesign detection systems with human decision-makers. Design of such systems has gained interest, some of such studies are discussed below. In \cite{Rhim2012}, Rhim \emph{et al.} have modeled the problem of distributed hypothesis testing as a \emph{categorical decision-making} problem. Inspired from the observation that human agents make decisions categorically, the authors have modeled the concept of categorization via quantization of prior probabilities. In \cite{Wimalajeewa2013}, Wimalajeewa and Varshney have investigated the problem of collaborative human decision making, where human agents are modeled as likelihood-ratio decision rules with random thresholds. More recently, Vempaty \emph{et al.} have proposed a coding theory based framework in \cite{Vempaty2014} to improve the performance of a distributed detection network with human agents. Later, in \cite{Vempaty2015}, Vempaty \emph{et al.} have proposed a Bayesian hierarchical model for the distributed detection framework to capture the uncertainty in the behavior of human teams. 

In contrast to the past work, we consider the problem of decision making by an individual human agent in the context of binary hypothesis testing. We assume that the decision-making of these individual human agents can be represented by prospect theory models, and therefore, they make decisions that minimize their behavioral risk, which is defined using prospect theory. To the best of our knowledge, this is the first work that considers the problem of prospect theory based decision making for binary hypothesis testing. We investigate the structure of optimal decision rules employed by two special types of behavioral agents, namely optimists and pessimists \cite{Diecidue2001}, under two different conditions depending on the decision costs incurred at the agent. 

\section{Problem Setup \label{sec: Prob-Setup}} 

\begin{figure}[!t]
	\centering
    \includegraphics[width=3.5in]{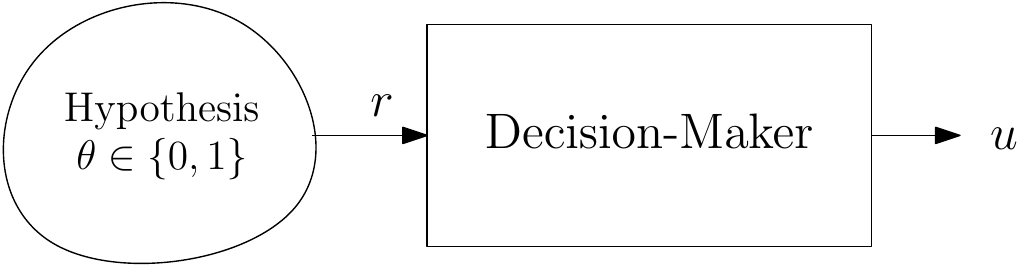}
    \caption{Framework for Binary Hypothesis-Testing}
    \label{Fig: model}
\end{figure}

Consider a binary hypothesis-testing framework with a behavioral decision-maker, as shown in Figure \ref{Fig: model}, where the true hypothesis is denoted by $\theta \in \{0,1\}$. Let $\pi_0$ and $\pi_1$ denote the prior probabilities of the hypotheses $\theta = 0$ and $\theta = 1$ respectively. We assume that the decision-maker receives a real-valued signal $r \in \mathbb{R}$ from the phenomenon-of-interest (PoI) with conditional distribution $p(r|\theta)$ and processes it to make an inference $u$ about the true hypothesis $\theta$. 

\begin{figure*}[!t]
	\centerline
    {
    	\subfloat[Typical weight-function]{\includegraphics[width=3.3in]{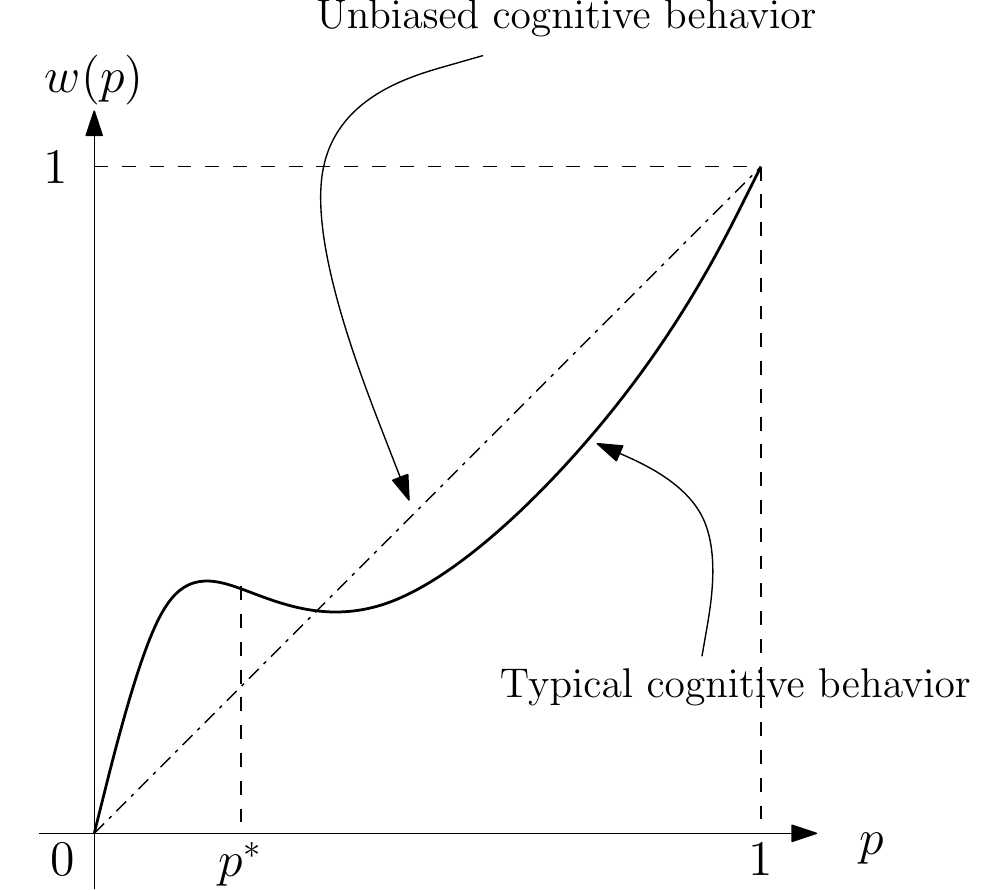}%
        \label{Fig: weight}}
        \hfil
        \subfloat[Typical value-function]{\includegraphics[width=3.3in]{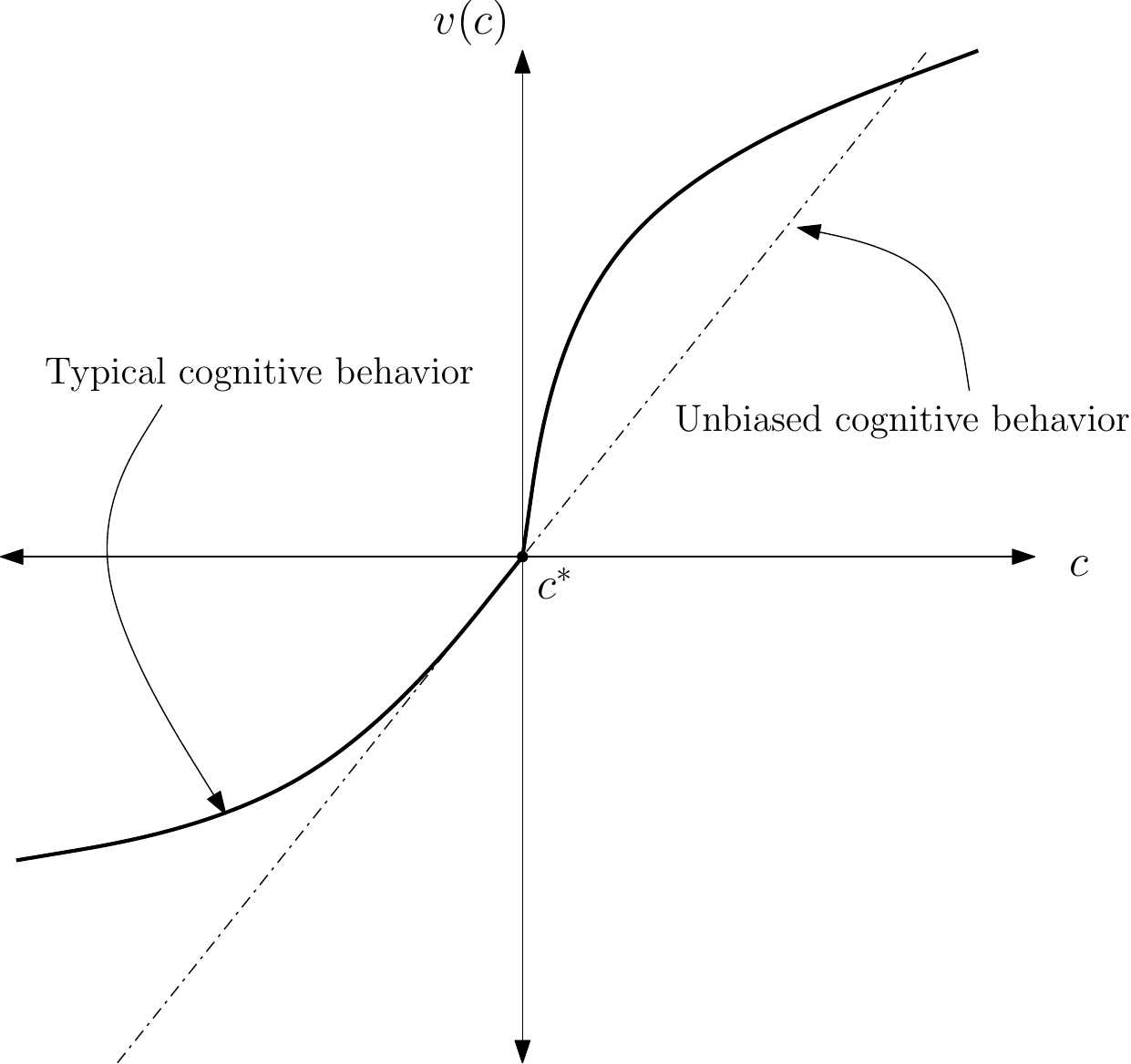}%
        \label{Fig: value}}
    }
    \caption{Modeling Cognitive Bias using Prospect Theory}
    \label{Fig: Prospect-Theory-Model}
\end{figure*}

In this paper, the rationality of the decision-maker is modeled using \emph{prospect theory}\cite{Kahneman1979}, where the behavioral agent cognitively distorts the probabilities and costs using known weight and value functions respectively. In other words, any given probability $\rho$ and any given cost $c$ are perceived as $w(\rho)$ and $v(c)$ respectively, where $w(\cdot)$ is the weight function and $v(\cdot)$ is the value function in the behavioral model. Our goal is to design an optimal decision rule that optimizes the behavioral risk at the decision maker.

In this paper, we assume that the decision maker employs the following decision rule:
\begin{equation}
	u = 
	\begin{cases}
		1; \quad \mbox{if } r \in \mathcal{R}
		\\
		0; \quad \mbox{otherwise},
	\end{cases}
	\label{Eqn: Decision Rule}
\end{equation}
where $\mathcal{R}$ is the acceptance region of the hypothesis $\theta$. The performance of the decision rule, as given in Equation \eqref{Eqn: Decision Rule}, is given by
\begin{subequations}
\begin{equation}
	x = Pr(u = 1 | \theta = 0) = \displaystyle \int_{\mathcal{R}} p(r | \theta = 0) dr,
\end{equation}
\begin{equation}
	y = Pr(u = 1 | \theta = 1)  = \displaystyle \int_{\mathcal{R}} p(r | \theta = 1) dr,
\end{equation}
\label{Eqn: Performance}
\end{subequations}
where $p(r | \theta = 0)$ and $p(r | \theta = 1)$ are the conditional pdfs of the observation $r$ under the two hypotheses $\theta = 0, 1$ respectively. 

Given that the decision maker makes an inference $u = i$ when the true hypothesis is $\theta = j$, we assume that the decision maker incurs a cost $c_{ij}$ for any $i,j \in \{ 0,1 \}$. Therefore, the behavioral risk due to the decision rule given in Equation \eqref{Eqn: Decision Rule}, is defined below:
\begin{equation}
	f(\mathcal{R}) = \displaystyle \sum_{i = 0}^1 \sum_{j = 0}^1 w[Pr(u = i, \theta = j)] \cdot v(c_{ij}).
	\label{Eqn: Risk-Behavioral-def}
\end{equation}

Assuming that the decision maker always wishes to minimize its behavioral risk $f(\mathcal{R})$, we present the following problem statement.
\begin{prob-statement}
	Find the optimal acceptance region $\mathcal{R}^*$ that minimizes $f(\mathcal{R})$ over all possible subsets of $\mathbb{R}$.
	\label{Prob.Stmt: Optimal-Decision-Rule}
\end{prob-statement}

Instead of representing a decision rule using its corresponding acceptance region $\mathcal{R}$, one can also equivalently parameterize the performance of the decision rule using two variables: $x = Pr(u = 1 | \theta = 0)$ and $y = Pr(u = 1| \theta = 1)$, as defined in Equation \eqref{Eqn: Performance}. In such a case, the behavioral risk as given in Equation \eqref{Eqn: Risk-Behavioral-def}, can be rewritten as follows.
\begin{equation}
	f(x,y) = \displaystyle g(x) + h(y),
	\label{Eqn: Risk-Bayes}
\end{equation}
where
\begin{subequations}
\begin{equation}
	g(x) = \displaystyle w[\pi_0(1 - x)] v(c_{00}) + w[\pi_0 x] v(c_{10}),
	\label{Eqn: g(x)}
\end{equation}
\begin{equation}
	h(y) = \displaystyle w[\pi_1 (1-y)] v(c_{01}) + w[\pi_1 y] v(c_{11}).
	\label{Eqn: h(y)}
\end{equation}
\end{subequations}

Furthermore, note that both $x$ and $y$ increase as the size of the region $\mathcal{R}$ increases, since the area under both the conditional distributions $p(x | \theta = 0)$ and $p(x | \theta = 1)$ changes concurrently with the region $\mathcal{R}$. Furthermore, from Caratheodary theorem \cite{Book-Rockafeller1996}, we know that any ROC curve can be made concave by allowing randomization of decision rules. Therefore, in this paper, we assume that $y$ is a concave-increasing function of $x$, for all $0 \leq x \leq 1$, with $y(x = 0) = 0$ and $y(x = 1) = 1$.

Note that the solution to the above problem depends on the weight and the value functions which may even make Problem \ref{Prob.Stmt: Optimal-Decision-Rule} intractable to solve. Therefore, in the following section, we present some basic assumptions on the weight and value functions which have been experimentally verified by several researchers in the past literature \cite{Barberis2003}.


\section{Behavioral Agents and their Properties \label{sec: Prospect Theory}}
Prospect theory models are defined using weight and value functions. While the risk-seeking/risk-averse nature of an agent is captured by the value function, the optimism/pessimism of an agent is modeled using the weight function. For example, the fear of an accident may make the probability of its occurrence seem larger to a human decision-maker, than the true probability of occurrence of an accident. Furthermore, the risk-averse behavior of human agents drives them to overvalue the cost of an accident. Prospect theory has been experimentally studied by several economists and psychologists, and is currently accepted universally to model human behavior. In the rest of this section, we present practical models for both weight and value functions that have been verified extensively in the literature. 

In prospect theory, the distortion of probabilities due to human behavior is modeled via weight functions. These weight functions are bounded within the unit square, with the line $w(p) = p$ denoting the unbiased cognitive behavior. The region above this line corresponds to the region of optimism, while the region below $w(p) = p$ is known as the region of pessimism. In most experimental studies\footnote{For more details about why the weight function has these properties in practice, the readers may refer to \cite{Prelec1998, Gonzalez1999} and references therein.}, as pointed out in Figure \ref{Fig: weight}, the weight functions have been observed to behave in the following manner.
\begin{property}
	There exists a unique $p^* \in [0,1]$ such that the weight function $w(p)$ is concave for all $p < p^*$, and convex for all $p \geq p^*$.
	\label{Assumption: Weight-Function}
\end{property}
Note that, the agent is \emph{optimistic} when $p^* = 1$, and the agent is \emph{pessimistic} when $p^* = 0$ \cite{Diecidue2001}. Furthermore, if the agent is optimistic, since the weight function is always bounded within a unit square, $w(p)$ is concave-increasing for all $p \in [0,1]$. Similarly, in the case of pessimistic agents, the weight function is convex-increasing for all $p \in [0,1]$.

On the other hand, the value functions are distortions that the human agent perceives when they incur a cost (or a reward). These functions are not necessarily bounded, and are observed to be piecewise convex\footnote{Traditionally, value functions are defined over rewards, and therefore, are assumed to be concave. In this paper, we define value functions over costs (or negative rewards) in order to align with the traditional definition of Bayes risk.}, as shown in Figure \ref{Fig: value}. But, in this paper, we relax this to a more general assumption, as defined below.
\begin{property}
	$v(c)$ is continuous and monotonically increasing for any $c \in \mathbb{R}$. Consequently, there exists a unique point $c^* \in \mathbb{R}$ such that $v(c^*) = 0$. 
	\label{Assumption: Value-Function}
\end{property}

Note that the cost $c^*$ at which $v(c^*) = 0$ is well-known as the \emph{reference point}. The reference point $c^*$ plays a major role in the decision-making process, and can be interpreted as a target/goal to any human decision maker \cite{Heath1999}. Consequently, any cost below the reference point $c^*$ appears as profits, and any cost above $c^*$ appears as losses to the human decision maker. Although several human behaviors such as loss aversion are attributed to the convexity of the value function, we have considered some special agent models in this paper which only rely on the continuity and monotonicity of the value function, and the reference point $c^*$.


\section{Optimal Detection Rules for Optimists \label{sec: Optimist-Decision-Rules}}
In this section, we assume that the behavioral agent is optimistic as shown in Figure \ref{Fig: weight-optimist}, and, therefore, the weight function is concave increasing. In such a case, we investigate two different scenarios depending on the decision costs incurred at the agent, as shown below.

\begin{figure}[!t]
	\centering
    \includegraphics[width=3.3in]{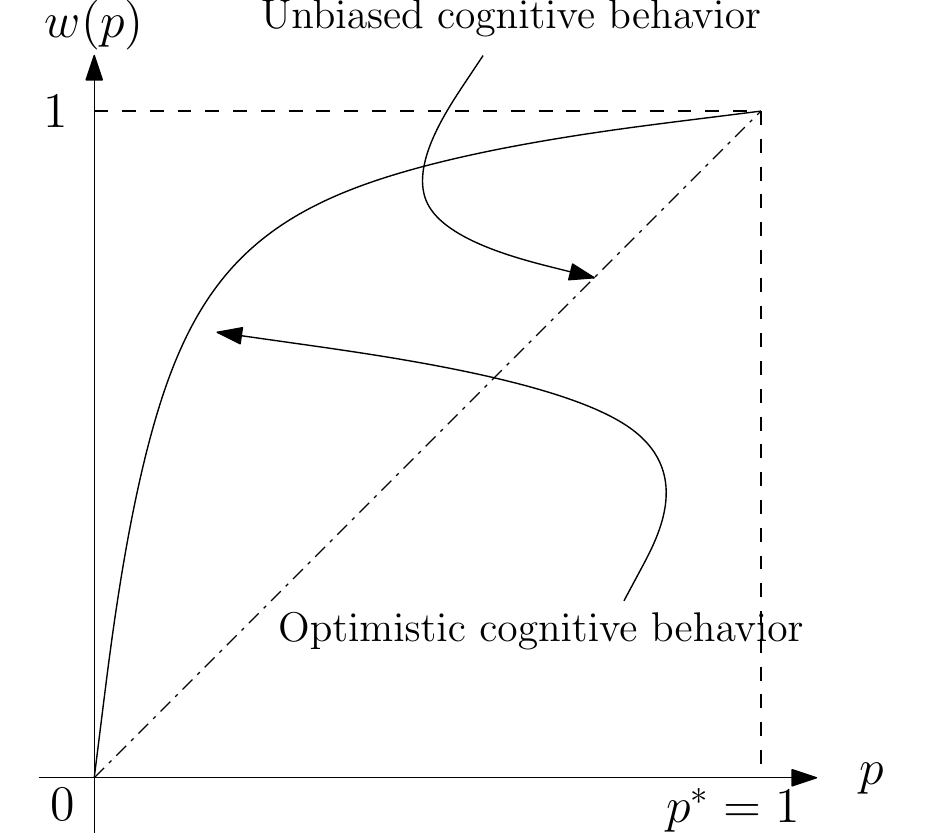}
    \caption{Weight Function for Optimistic Agents}
    \label{Fig: weight-optimist}
\end{figure}

\subsection*{\textbf{\underline{Type-1}}: $ c^* \leq \min \{ c_{00}, c_{01}, c_{10}, c_{11} \} $ \label{sec: Optimist-Type-1}}

This is the case where even an optimist perceives the costs of all the possible choices to be detrimental, i.e., the decision costs of all the available choices are perceived as losses. In other words, given a reference point $c^*$, the optimist perceives a decision cost $c_{ij}$ as $v(c_{ij}) \geq 0$, for all $i, j \in \{ 0,1\}$. As a result, we have the following lemma.

\begin{lemma}
	For all Type-1 optimists, $f(x,y(x))$ is a concave function of $x$, for all $0 \leq x \leq 1$.
	\label{Lemma: f-optimistic-case1}
\end{lemma}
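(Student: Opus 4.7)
The plan is to establish concavity of $F(x) := f(x, y(x))$ by decomposing it as $F(x) = g(x) + h(y(x))$ and handling the two summands separately.

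First, I would verify that $g(x)$ is concave on $[0,1]$. Under the optimist assumption (Property \ref{Assumption: Weight-Function} with $p^* = 1$), $w$ is concave-increasing on $[0,1]$. The arguments $\pi_0(1-x)$ and $\pi_0 x$ are affine in $x$, so both $w(\pi_0(1-x))$ and $w(\pi_0 x)$ are concave in $x$. The Type-1 hypothesis $c^* \le \min_{i,j} c_{ij}$, combined with $v(c^*) = 0$ and the monotonicity of $v$ from Property \ref{Assumption: Value-Function}, yields $v(c_{00}), v(c_{10}) \ge 0$. Hence $g(x)$ is a nonnegative linear combination of concave functions and is therefore concave.

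Next I would show that $h(y(x))$ is concave in $x$. The same argument gives that $h$ is concave in $y$ on $[0,1]$: both $w(\pi_1(1-y))$ and $w(\pi_1 y)$ are concave in $y$, weighted by the nonnegative scalars $v(c_{01})$ and $v(c_{11})$. To transfer this concavity through the concave-increasing ROC curve $y(x)$, I would work with the second-derivative identity
\[
\frac{d^2}{dx^2} h(y(x)) = h''(y(x))\,(y'(x))^2 + h'(y(x))\,y''(x),
\]
where the first summand is nonpositive because $h'' \le 0$, and the second is nonpositive provided $h'(y) \ge 0$ on the relevant range. Establishing this sign condition is where the Type-1 inequality and the optimist shape of $w$ must be fully exploited; the cleanest route is to show that for any $y \in [0,1]$ one has $w'(\pi_1 y)\,v(c_{11}) \ge w'(\pi_1(1-y))\,v(c_{01})$ by combining concavity of $w$ (monotonic $w'$) with the positivity of each $v(c_{ij})$ guaranteed by Type-1.

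Summing the two pieces, $F(x) = g(x) + h(y(x))$ is the sum of two concave functions of $x$ on $[0,1]$ and hence concave, which is the claim of the lemma.

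The main obstacle is precisely the composition step: a concave function composed with a concave-increasing inner function is not automatically concave, since monotonicity of the outer function is also needed. I therefore expect the crux of the proof to lie in a careful exploitation of the Type-1 reference-point condition to pin down the sign of $h'$ on $[0,1]$, after which the assembly into a concave $F(x)$ is routine.
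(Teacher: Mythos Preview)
Your decomposition $F(x)=g(x)+h(y(x))$ and the concavity arguments for $g$ and for $h$ (each in its own variable) are exactly what the paper does. The paper then simply asserts that, since $y$ is concave-increasing in $x$, the composite $h(y(x))$ is concave in $x$, without addressing the monotonicity of $h$ that such a composition rule requires. You are right to isolate this as the crux.

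However, the route you sketch to close it does not go through under the stated hypotheses. From
\[
h'(y)=\pi_1\Bigl(w'(\pi_1 y)\,v(c_{11})-w'\bigl(\pi_1(1-y)\bigr)\,v(c_{01})\Bigr),
\]
you would need $w'(\pi_1 y)\,v(c_{11})\ge w'(\pi_1(1-y))\,v(c_{01})$ for all $y\in[0,1]$. Type-1 only gives $v(c_{01}),v(c_{11})\ge 0$, with no ordering between them, while concavity of $w$ makes $w'$ nonincreasing. In the natural regime $c_{01}>c_{11}$ (a miss costlier than a correct detection) one has $v(c_{01})>v(c_{11})$, and for $y>\tfrac12$ also $w'(\pi_1 y)\le w'(\pi_1(1-y))$; the desired inequality then fails. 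Concretely, with $w(p)=\sqrt{p}$, $\pi_1=\tfrac12$, $v(c_{11})=1$, $v(c_{01})=10$, and the admissible ROC $y(x)=\sqrt{x}$, at $x=\tfrac14$ (so $y=\tfrac12$) one computes $h'(y)\approx -4.5$, $h''(y)\approx -5.5$, $y'=1$, $y''=-2$, and hence
\[
\frac{d^2}{dx^2}\,h(y(x))=h''(y)\,(y')^2+h'(y)\,y''\approx -5.5+9=3.5>0.
\]
Thus ``concavity of $w$ plus positivity of the $v(c_{ij})$'' cannot force $h'\ge 0$, and the composition step---both in your plan and in the paper's own proof---remains open without an additional structural assumption on the costs or on $w$.
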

\begin{proof}
We differentiate Equation \eqref{Eqn: g(x)} twice to obtain the following.
\begin{equation}
	\displaystyle \frac{d^2 g(x)}{dx^2} =  \displaystyle \pi_0^2 \left\{ \frac{d^2 w[\pi_0 x]}{dx^2} v(c_{00}) + \frac{d^2 w[\pi_0(1 - x)]}{dx^2} v(c_{10}) \right\}.
\end{equation}
Given that the behavioral agent is optimistic, we have
\begin{equation}
	\begin{array}{lcl}
		\displaystyle \frac{d^2 w[\pi_0 x]}{dx^2} \leq 0, & \mbox{and} & \displaystyle \frac{d^2 w[\pi_0(1 - x)]}{dx^2} \leq 0.
	\end{array}
\end{equation}
Since $v(c_{00})$ and $v(c_{10})$ are both non-negative, we have $\displaystyle \frac{d^2 g(x)}{dx^2} \leq 0$. 

Since both $g(x)$ and $h(y)$ have similar structure, we also have $\displaystyle \frac{d^2 h(y)}{dy^2} \leq 0$. Given that $y$ is a concave increasing function of $x$, $h(y(x))$ is a concave function of $x$. Since $f(x,y(x))$ is the sum of two concave functions $g(x)$ and $h(y(x))$, the behavioral risk is a concave function of $x$.
\end{proof}

Given that the behavioral agent wishes to minimize its risk $f(x,y(x))$, the optimal decision rule lies on the extreme point of the ROC curve, i.e., $(x,y) = (0,0)$ or $(x,y) = (1,1)$. Furthermore, if $c_{00} = c_{11} = c_L \leq c_U = c_{01} = c_{10}$, we have
\begin{subequations}
\begin{equation}
	f(0,0) = w(\pi_0) v(c_L) + w(\pi_1) v(c_U),
\end{equation}
\begin{equation}
	f(1,1) = w(\pi_0) v(c_U) + w(\pi_1) v(c_L).
\end{equation}
\end{subequations}
As a result, if $\pi_0 \geq \displaystyle \frac{1}{2}$, we have $w(\pi_0) \geq w(\pi_1)$. So, we have
\begin{equation}
	f(1,1) - f(0,0) = \left[ w(\pi_0) - w(\pi_1) \right] \cdot \left[ v(c_U) - v(c_L) \right] \geq 0.
\end{equation}
In other words, the behavioral agent chooses the operating point $(1,1)$ in the ROC, which is equivalent to a decision rule where $\mathcal{R} = \mathbb{R}$, i.e., the behavioral agent always decides $u = 1$.

On the other hand, if $\pi_0 < \displaystyle \frac{1}{2}$, we have 
\begin{equation}
	f(1,1) - f(0,0) = \left[ w(\pi_0) - w(\pi_1) \right] \cdot \left[ v(c_U) - v(c_L) \right] \leq 0.
\end{equation}
As a result, the behavioral agent adopts the operating point $(0,0)$. This is equivalent to the case where the behavioral agent always decides $u = 0$. 

In summary, we have the following theorem.
\begin{thrm}
	If $c^* \leq \min \{ c_{00}, c_{01}, c_{10}, c_{11} \}$, an optimist minimizes its behavioral risk by either always deciding $u = 0$, or $u = 1$, for any observation $x \in \mathbb{R}$.
	
	Furthermore, if $c_{00} = c_{11} = c_L \leq c_U = c_{01} = c_{10}$, then a Type-1 optimist employs the following decision rule.
	\begin{equation}
		u = 
		\begin{cases}
			1, \mbox{ if } \pi_0 \geq \displaystyle \frac{1}{2}
			\\
			0, \mbox{ otherwise.}
		\end{cases}
	\end{equation}
	\label{Thrm: Type-1-optimists}
\end{thrm}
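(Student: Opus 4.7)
The plan is to exploit Lemma 1, which establishes that $f(x,y(x))$ is concave on $[0,1]$. Since a concave function on a compact interval attains its minimum at an endpoint, the optimal operating point must be either $x=0$ or $x=1$. Using the boundary conditions $y(0)=0$ and $y(1)=1$ noted just before the lemma, together with $w(0)=0$, these endpoints correspond precisely to the two constant decision rules $u \equiv 0$ (acceptance region of measure zero under both hypotheses) and $u \equiv 1$ (acceptance region equal to $\mathbb{R}$). This gives the first assertion of the theorem without any further work.

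For the second assertion, I would substitute the symmetric costs $c_{00}=c_{11}=c_L$ and $c_{01}=c_{10}=c_U$ into \eqref{Eqn: g(x)} and \eqref{Eqn: h(y)} and evaluate at the two endpoints. Using $w(0)=0$ to drop the vanishing terms, this produces closed-form expressions for $f(0,0)$ and $f(1,1)$ as linear combinations of $w(\pi_0),w(\pi_1),v(c_L),v(c_U)$, and subtraction yields the factorization
\begin{equation*}
f(1,1)-f(0,0) \;=\; \bigl[w(\pi_0)-w(\pi_1)\bigr]\,\bigl[v(c_U)-v(c_L)\bigr],
\end{equation*}
which reduces the endpoint comparison to a sign analysis of two brackets.

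The second bracket is nonnegative by Property 2 combined with $c_L \leq c_U$. For the first bracket, the optimist's weight function is concave on $[0,1]$ with $w(0)=0$, $w(1)=1$, and confined to the unit square; these constraints force $w$ to be nondecreasing (otherwise the upper bound $w\le 1$ would be violated near an interior maximum). Hence $\pi_0 \geq 1/2$ yields $w(\pi_0) \geq w(\pi_1)$, whereas $\pi_0 < 1/2$ yields the reverse inequality, and matching signs with the second bracket selects the minimizing endpoint in each regime to recover the stated rule.

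The proof is essentially algebraic bookkeeping atop Lemma 1, so there is no serious analytical obstacle. The only subtlety worth checking carefully is the translation between endpoints of the ROC parametrization and constant decision rules, which rests on the trio of normalizations $w(0)=0$, $y(0)=0$, and $y(1)=1$; once that identification is clean, the rest is a one-line factorization followed by a two-case sign check.
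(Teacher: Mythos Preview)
Your proposal is correct and follows essentially the same route as the paper: invoke Lemma~\ref{Lemma: f-optimistic-case1} to get concavity, conclude the minimum lies at an ROC endpoint, evaluate $f(0,0)$ and $f(1,1)$ under the symmetric costs (using $w(0)=0$ implicitly, just as the paper does), and obtain the identical factorization $f(1,1)-f(0,0)=[w(\pi_0)-w(\pi_1)][v(c_U)-v(c_L)]$ whose sign is read off from the monotonicity of $w$ and $v$. The only minor addition is your explicit justification that $w$ is nondecreasing, which the paper simply asserts as part of the optimist model.
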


In summary, when all the decision costs appear detrimental to an optimist, the optimal decision rule is fixed, and independent of data. Furthermore, in concurrence to our intuition, whenever $c_{00} = c_{11} = c_L \leq c_U = c_{01} = c_{10}$, the optimist optimally chooses the option $u_0 \in \{ 0, 1 \}$ that is antipodal to prior probabilities in order to minimize its behavioral risk.

\subsection*{\textbf{\underline{Type-2}}: $ c^* \geq \max \{ c_{00}, c_{01}, c_{10}, c_{11} \} $ \label{sec: Optimist-Type-3}}
 
In contrast to Type-1 optimists, Type-2 optimists interpret the same decision costs as being profitable. In other words, given that the reference point $c^*$ lies above all the decision costs, $v(c_{ij}) \leq 0$, for all $i, j \in \{ 0,1\}$. As a result, we have the following lemma.

\begin{lemma}
	For all Type-2 optimists, $f(x,y(x))$ is a convex function of $x$, for all $0 \leq x \leq 1$.
	\label{Lemma: f-optimistic-case3}
\end{lemma}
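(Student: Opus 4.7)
The plan is to mirror the proof of Lemma~\ref{Lemma: f-optimistic-case1} step for step, tracking how the Type-2 hypothesis flips the signs of the value-function factors. Optimism still yields $w''\le 0$, but now each $v(c_{ij})$ is nonpositive rather than nonnegative; the product $w''\cdot v(c_{ij})$ is therefore nonnegative where it was previously nonpositive, which should turn each concavity in the earlier argument into the corresponding convexity.

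Concretely, I would first differentiate equation~\eqref{Eqn: g(x)} twice to obtain
\begin{equation*}
g''(x)=\pi_0^2\bigl\{w''[\pi_0 x]\,v(c_{10})+w''[\pi_0(1-x)]\,v(c_{00})\bigr\}.
\end{equation*}
By Property~\ref{Assumption: Weight-Function} for an optimist, $w''\le 0$ throughout $[0,1]$, and by the Type-2 hypothesis $v(c_{00}),v(c_{10})\le 0$ via the monotonicity of $v$ and $v(c^*)=0$. Each summand is then a product of nonpositive quantities and hence nonnegative, giving $g''(x)\ge 0$, so $g$ is convex on $[0,1]$. Because $h(y)$ in equation~\eqref{Eqn: h(y)} has an identical structure with $\pi_1$ and $(c_{01},c_{11})$ replacing $\pi_0$ and $(c_{00},c_{10})$, the same calculation yields $h''(y)\ge 0$, so $h$ is convex in $y$.

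It then remains to transfer convexity of $h$ in $y$ to convexity of $h(y(x))$ in $x$, given that $y(x)$ is concave-increasing with $y(0)=0$ and $y(1)=1$. I would invoke the same composition step employed at the end of Lemma~\ref{Lemma: f-optimistic-case1}: expand
\begin{equation*}
\tfrac{d^2}{dx^2}h(y(x))=h''(y(x))\,(y'(x))^2+h'(y(x))\,y''(x),
\end{equation*}
and use the Type-2 sign conditions on $v(c_{01}),v(c_{11})$ together with $w'\ge 0$ and $y''\le 0$ to bound the second term in the correct direction. Once $h(y(x))$ is shown to be convex in $x$, the conclusion follows since $f(x,y(x))=g(x)+h(y(x))$ is a sum of two convex functions.

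The delicate point, exactly as in Lemma~\ref{Lemma: f-optimistic-case1}, is the composition step: the term $w[\pi_1(1-y(x))]$ couples a concave-increasing $w$ with a convex-decreasing inner argument, so neither the standard convex-composition nor concave-composition rule applies off the shelf. I expect to dispatch this the same way the earlier proof handled it for Type-1 agents---either by appealing directly to that lemma's composition convention, or by grouping the nonpositive scalars produced by the Type-2 condition inside the chain-rule expansion to pin down the overall sign of $\tfrac{d^2}{dx^2}h(y(x))$.
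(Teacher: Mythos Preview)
Your proposal is essentially the paper's own proof: the paper merely states that the argument ``is similar to that of Lemma~\ref{Lemma: f-optimistic-case1}, and is therefore, omitted,'' i.e., exactly the sign-flipped mirror you outline, with $w''\le 0$ from optimism and $v(c_{ij})\le 0$ from the Type-2 hypothesis giving $g''\ge 0$ and $h''\ge 0$. You are in fact more careful than the paper in flagging the composition step for $h(y(x))$ as delicate; the paper handles the analogous step in Lemma~\ref{Lemma: f-optimistic-case1} by bare assertion, so ``dispatching it the same way the earlier proof handled it'' amounts to adopting that same convention rather than a separate chain-rule argument.
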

\begin{proof}
	Proof is similar to that of Lemma \ref{Lemma: f-optimistic-case1}, and is therefore, omitted for the sake of brevity.
\end{proof}

In other words, the behavioral agent minimizes its risk at some intermediate operating point $(x^*, y^*)$. In the following theorem, we state the necessary condition that $(x^*, y^*)$ will satisfy. We find this condition by equating the first derivative of the risk function $f(x,y(x))$ with respect to x to zero.

\begin{thrm}
The operating point $(x^*,y^*)$ of the optimal decision rule employed by a Type-2 optimist is a root of the equation
\begin{equation}
	\begin{array}{l}
		\displaystyle \frac{d w[\pi_0 x^*]}{dx} v(c_{10}) + \frac{d w[\pi_1 y^*(x^*)]}{dx} v(c_{11}) 
		\\[2ex]
		\quad = \displaystyle \frac{d w[\pi_0(1 - x^*)]}{dx} v(c_{00}) + \frac{d w[\pi_1\{1 - y^*(x^*)\}]}{dx} v(c_{01}).
	\end{array}
\end{equation}
\label{Thrm: Type-3-optimists}
\end{thrm}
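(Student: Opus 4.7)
The plan is to exploit the convexity of $f(x,y(x))$ established in Lemma \ref{Lemma: f-optimistic-case3}: because $f$ is convex on the compact interval $[0,1]$, a global minimizer $x^{*}$ exists, and under the Type-2 hypothesis it is argued (in the paragraph preceding the theorem) to lie in the open interval $(0,1)$. Granting smoothness of the weight function $w$ and of the ROC relation $y(x)$, the first-order necessary condition $\left.\frac{df}{dx}\right|_{x=x^{*}}=0$ then characterizes $x^{*}$, and the theorem is obtained by explicitly evaluating this derivative and rearranging.

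Concretely, I would first differentiate $g(x)$ from Equation \eqref{Eqn: g(x)} term-by-term using the chain rule, producing two contributions at the arguments $\pi_0 x$ and $\pi_0(1-x)$ weighted by $v(c_{10})$ and $v(c_{00})$ respectively (with a minus sign from the $1-x$ term). Then I would compute $\frac{d}{dx}h(y(x)) = \frac{dh}{dy}\,y'(x)$ by a second application of the chain rule applied to Equation \eqref{Eqn: h(y)}, yielding analogous contributions at arguments $\pi_1 y$ and $\pi_1(1-y)$ weighted by $v(c_{11})$ and $v(c_{01})$ and scaled by the ROC slope $y'(x^{*})$. Summing the four terms and setting the sum to zero gives a single scalar equation.

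The final step is to group the ``direct'' terms (arguments $\pi_0 x^{*}$ and $\pi_1 y^{*}$, paired with $v(c_{10})$ and $v(c_{11})$) on one side and the ``complementary'' terms (arguments $\pi_0(1-x^{*})$ and $\pi_1(1-y^{*})$, paired with $v(c_{00})$ and $v(c_{01})$) on the other. After this rearrangement, one recognizes exactly the equation stated in the theorem, with the chain-rule sign from differentiating $1-x$ and $1-y$ absorbed into the compact notation $\frac{dw[\pi_0(1-x^{*})]}{dx}$ and $\frac{dw[\pi_1\{1-y^{*}(x^{*})\}]}{dx}$.

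The main obstacle is purely bookkeeping rather than conceptual: one must keep careful track of the minus sign coming from the complementary arguments, of the common outer-chain factors $\pi_0,\pi_1$, and of the ROC-slope factor $y'(x^{*})$, so that the rearrangement produces the symmetric form claimed. It is also worth flagging that the stated equation is only necessary: sufficiency relies on the convexity of $f$ from Lemma \ref{Lemma: f-optimistic-case3} together with a separate verification that the Type-2 sign conditions $v(c_{ij})\le 0$ rule out boundary minima $x^{*}\in\{0,1\}$, which is implicit in the assertion that the optimizer is ``intermediate.''
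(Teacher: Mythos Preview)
Your proposal is correct and matches the paper's own argument essentially verbatim: the paper states just before the theorem that ``we find this condition by equating the first derivative of the risk function $f(x,y(x))$ with respect to $x$ to zero,'' and gives no further detail. Your write-up is in fact more careful than the paper's, since you explicitly track the chain-rule factors $\pi_0$, $\pi_1$, $y'(x^{*})$ and the sign from the $1-x$ argument, and you correctly flag that the displayed equation is only a necessary condition with sufficiency supplied by Lemma~\ref{Lemma: f-optimistic-case3}.
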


Given that all the decision costs appear profitable to a Type-2 optimist, it is intuitive that the agent chooses a decision rule that minimizes its behavioral risk. Theorem \ref{Thrm: Type-3-optimists} presents the necessary condition to find the optimal operating point for the Type-2 optimist.


\section{Optimal Detection Rules for Pessimists \label{sec: Pessimist-Decision-Rules}}
In this section, we assume that the behavioral agent is pessimistic as shown in Figure \ref{Fig: weight-pessimist}, and therefore, the weight function is convex increasing. More specifically, we analyze the same two types of agents as discussed in Section \ref{sec: Optimist-Decision-Rules}, within the context of pessimists.  

\begin{figure}[!t]
	\centering
    \includegraphics[width=3.3in]{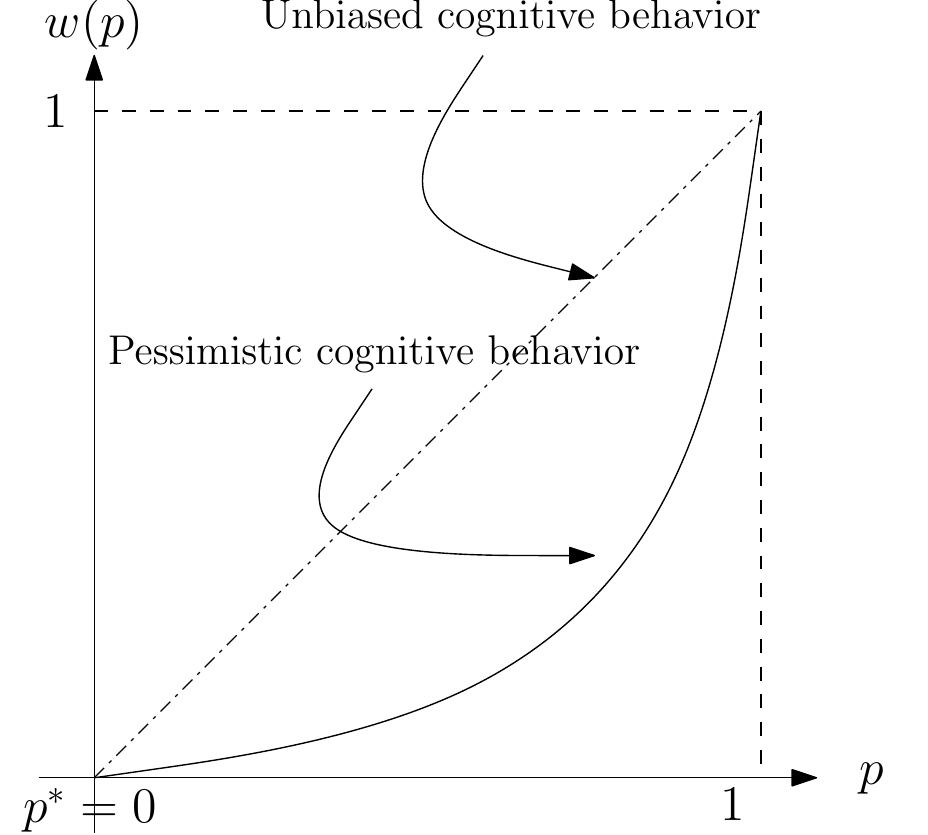}
    \caption{Weight Function for Pessimistic Agents}
    \label{Fig: weight-pessimist}
\end{figure}

\subsection*{\textbf{\underline{Type-1}}: $ c^* \leq \min \{ c_{00}, c_{01}, c_{10}, c_{11} \} $ \label{sec: Pessimist-Type-1}}

As discussed in Section \ref{sec: Optimist-Type-1}, we assume that a Type-1 pessimist also finds all its possible decision choices to be detrimental. In other words, given that $c^*$ lies below all the decision costs, $v(c_{ij}) \geq 0$, for all $i, j \in \{ 0,1\}$. As a result, we have the following lemma.

\begin{lemma}
	For all Type-1 pessimists, $f(x,y(x))$ is a convex function of $x$, for all $0 \leq x \leq 1$.
	\label{Lemma: f-pessimistic-case1}
\end{lemma}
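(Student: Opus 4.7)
The plan is to mirror the proof of Lemma \ref{Lemma: f-optimistic-case1} verbatim, flipping every inequality to exploit the pessimist's convex-increasing weight function in place of the optimist's concave-increasing one. The decomposition $f(x,y(x)) = g(x) + h(y(x))$ from Equation \eqref{Eqn: Risk-Bayes} remains the backbone, so it suffices to establish convexity in $x$ of each summand and then add.

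First, I would differentiate $g(x)$ in Equation \eqref{Eqn: g(x)} twice, obtaining the same $\pi_0^2$-scaled combination of $w''[\pi_0 x]$ and $w''[\pi_0(1-x)]$ weighted by $v(c_{00})$ and $v(c_{10})$ that appears in the proof of Lemma \ref{Lemma: f-optimistic-case1}. For a pessimist, Property \ref{Assumption: Weight-Function} specialized to $p^* = 0$ implies $w$ is convex on $[0,1]$, so both $w''[\pi_0 x] \geq 0$ and $w''[\pi_0(1-x)] \geq 0$. The Type-1 hypothesis $c^* \leq \min\{c_{ij}\}$ together with Property \ref{Assumption: Value-Function} forces $v(c_{00}), v(c_{10}) \geq 0$. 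Consequently $\frac{d^2 g(x)}{dx^2} \geq 0$, so $g$ is convex in $x$. The identical calculation applied to $h(y)$ in Equation \eqref{Eqn: h(y)}, using $v(c_{01}), v(c_{11}) \geq 0$, gives $\frac{d^2 h(y)}{dy^2} \geq 0$.

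The final step is to propagate convexity of $h$ in $y$ through the concave-increasing ROC curve $y(x)$ to obtain convexity of $h(y(x))$ in $x$, after which convexity of $f(x,y(x))$ follows as a sum of convex functions. This is the step I expect to be the main obstacle, since, strictly speaking, the composition of a convex $h$ with a concave $y$ need not be convex: the cross term $h'(y)\, y''(x)$ in $\frac{d^2}{dx^2} h(y(x)) = h''(y)(y'(x))^2 + h'(y) y''(x)$ can have the wrong sign unless $h$ is monotone decreasing, and $h$ need not be monotone in general. Since the companion proof of Lemma \ref{Lemma: f-optimistic-case1} invokes the analogous composition rule without further discussion, I would follow the same convention to keep the two proofs structurally parallel (as advertised by the ``similar to'' citation in the paper); if a tighter argument were demanded, I would differentiate $h(y(x))$ explicitly along the ROC and use the non-negativity of $v(c_{01}), v(c_{11})$ together with convexity of $w$ to close the sign analysis pointwise.
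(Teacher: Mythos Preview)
Your approach is essentially identical to the paper's: the same second-derivative computation for $g$ and $h$, followed by the same appeal to a composition argument for $h(y(x))$ and then summing. The concern you flag about the composition step is well placed. The paper does not address it the way you anticipate: instead of invoking the concave-increasing ROC assumption from Section~\ref{sec: Prob-Setup}, the proof of this lemma asserts that ``$y$ is a convex increasing function of $x$'' and then concludes convexity of $h(y(x))$ from that. This assertion is inconsistent with the standing assumption made earlier in the paper, and the paper offers no justification for the switch; it simply sidesteps the sign issue you identified in $h'(y)\,y''(x)$ by reversing the curvature of the ROC. So your proof is at least as careful as the paper's, and your diagnosis of where the argument is loose is accurate.
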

\begin{proof}
We differentiate Equation \eqref{Eqn: g(x)} twice to obtain the following.
\begin{equation}
	\displaystyle \frac{d^2 g(x)}{dx^2} =  \displaystyle \pi_0^2 \left\{ \frac{d^2 w[\pi_0 x]}{dx^2} v(c_{00}) + \frac{d^2 w[\pi_0(1 - x)]}{dx^2} v(c_{10}) \right\}.
\end{equation}
Given that the behavioral agent is pessimistic, we have
\begin{equation}
	\begin{array}{lcl}
		\displaystyle \frac{d^2 w[\pi_0 x]}{dx^2} \geq 0, & \mbox{and} & \displaystyle \frac{d^2 w[\pi_0(1 - x)]}{dx^2} \geq 0.
	\end{array}
\end{equation}
Since $v(c_{00})$ and $v(c_{10})$ are both non-negative, we have $\displaystyle \frac{d^2 g(x)}{dx^2} \geq 0$. 

Since both $g(x)$ and $h(y)$ have a similar structure, we also have $\displaystyle \frac{d^2 h(y)}{dy^2} \geq 0$. Given that $y$ is a convex increasing function of $x$, $h(y(x))$ is a convex function of $x$. Since $f(x,y(x))$ is the sum of two convex functions $g(x)$ and $h(y(x))$, the behavioral risk is a convex function of $x$.
\end{proof}

In other words, the behavioral agent minimizes its risk at some intermediate operating point $(x^*, y^*)$, which satisfies the following necessary condition. We find this condition by equating the first derivative of the risk function $f(x,y(x))$ with respect to $x$ to zero.

\begin{thrm}
The operating point $(x^*,y^*)$ of the optimal decision rule employed by a Type-1 pessimist is a root of the equation
\begin{equation}
	\begin{array}{l}
		\displaystyle \frac{d w[\pi_0 x^*]}{dx} v(c_{10}) + \frac{d w[\pi_1 y^*(x^*)]}{dx} v(c_{11}) 
		\\[2ex]
		\quad = \displaystyle \frac{d w[\pi_0(1 - x^*)]}{dx} v(c_{00}) + \frac{d w[\pi_1\{1 - y^*(x^*)\}]}{dx} v(c_{01}).
	\end{array}
\end{equation}
\label{Thrm: Type-1-Pessimists}
\end{thrm}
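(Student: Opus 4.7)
The plan is to invoke Lemma \ref{Lemma: f-pessimistic-case1}, which establishes that $f(x,y(x))$ is convex on $[0,1]$, and then read off the first-order optimality condition at an interior minimizer. Since a differentiable convex function attains its minimum precisely at its stationary points (when such a point is interior to the feasible interval), it suffices to compute $\frac{d}{dx} f(x,y(x))$, set it to zero at $(x^*, y^*(x^*))$, and rearrange the result into the form displayed in the theorem.

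First I would exploit the additive decomposition $f(x,y(x)) = g(x) + h(y(x))$ given in \eqref{Eqn: Risk-Bayes}. Differentiating \eqref{Eqn: g(x)} directly yields $\frac{dg}{dx} = \pi_0 w'[\pi_0 x]\, v(c_{10}) - \pi_0 w'[\pi_0(1-x)]\, v(c_{00})$, while the chain rule applied to \eqref{Eqn: h(y)}, together with $\frac{d}{dx} h(y(x)) = y'(x)\, h'(y)$, gives $\frac{d}{dx} h(y(x)) = \pi_1 y'(x) w'[\pi_1 y]\, v(c_{11}) - \pi_1 y'(x) w'[\pi_1(1-y)]\, v(c_{01})$. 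Adding these contributions and equating the sum to zero at $x = x^*$ produces a balance between two pairs of terms; moving the $v(c_{00})$ and $v(c_{01})$ contributions to the right-hand side then reproduces, term for term, the identity stated in the theorem, once one adopts the paper's shorthand in which $\frac{d w[\pi_0 x^*]}{dx}$, $\frac{d w[\pi_1 y^*(x^*)]}{dx}$, and their $(1-\cdot)$ counterparts denote the (positive) rates of change of the composite maps $x \mapsto w[\pi_0 x]$, $x \mapsto w[\pi_1 y(x)]$, etc., along $x$.

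The final step is a brief justification that this condition is genuinely necessary. Convexity from Lemma \ref{Lemma: f-pessimistic-case1} ensures that any interior minimizer in $(0,1)$ must satisfy $f'(x^*) = 0$, and, conversely, that any interior root of the displayed equation is a global minimizer of the behavioral risk. I expect the main obstacle here to be purely notational bookkeeping, namely keeping careful track of the chain-rule signs arising from the $(1-x)$ and $(1-y(x))$ arguments so that the rearranged equation matches the author's shorthand, rather than any substantive mathematical difficulty. I would also flag in passing the standard caveat that if the convex minimum is attained at a boundary point $x^* \in \{0,1\}$ the equality collapses to a one-sided inequality; in the scenario covered by the theorem, however, the prior-plus-cost regime pushes the optimum into the interior, so the stated equality is the relevant characterization.
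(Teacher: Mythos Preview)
Your proposal is correct and follows essentially the same approach as the paper: invoke Lemma~\ref{Lemma: f-pessimistic-case1} to obtain convexity, then set the first derivative of $f(x,y(x))$ to zero and rearrange. In fact, the paper gives no proof beyond the one-line remark that the condition is obtained ``by equating the first derivative of the risk function $f(x,y(x))$ with respect to $x$ to zero,'' so your write-up is strictly more detailed than the original.
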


Note that our results in Theorem \ref{Thrm: Type-1-Pessimists} are contrary to that of a Type-1 optimist, and resemble that of Theorem \ref{Thrm: Type-3-optimists}. Although all the decision costs appear detrimental, due to the pessimistic nature of the agent, the agent will attempt to find the optimal rule that satisfies the necessary condition stated in Theorem \ref{Thrm: Type-1-Pessimists}.

\subsection*{\textbf{\underline{Type-2}}: $c^* \geq \max \{ c_{00}, c_{01}, c_{10}, c_{11} \}$ \label{sec: Pessimist-Type-3}}

As stated in Section \ref{sec: Optimist-Type-3}, we assume that all the decision costs appear profitable to the agent. In other words, the agent's reference point $c^*$ lies above all the decision costs, $v(c_{ij}) \leq 0$, for all $i, j \in \{ 0,1\}$. As a result, we have the following lemma.

\begin{lemma}
	For all Type-2 pessimists, $f(x,y(x))$ is a concave function of $x$, for all $0 \leq x \leq 1$.
	\label{Lemma: f-pessimistic-case3}
\end{lemma}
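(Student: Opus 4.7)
The plan is to mirror the proof of Lemma~\ref{Lemma: f-pessimistic-case1} (Type-1 pessimist), simply reversing the sign conclusions on $v(c_{ij})$ to exploit the Type-2 hypothesis. Concretely, I would first differentiate $g(x)$ in \eqref{Eqn: g(x)} twice with respect to $x$ to obtain the same expression as in that lemma,
\begin{equation*}
	\frac{d^2 g(x)}{dx^2} = \pi_0^2 \left\{ \frac{d^2 w[\pi_0 x]}{dx^2} v(c_{00}) + \frac{d^2 w[\pi_0(1-x)]}{dx^2} v(c_{10}) \right\}.
\end{equation*}
Because the agent is pessimistic, Property~\ref{Assumption: Weight-Function} (with $p^* = 0$) gives $w''[\pi_0 x] \geq 0$ and $w''[\pi_0(1-x)] \geq 0$ on $[0,1]$. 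Because the agent is Type-2, Property~\ref{Assumption: Value-Function} together with $c^* \geq \max\{c_{00},c_{01},c_{10},c_{11}\}$ yields $v(c_{ij}) \leq 0$ for all $i,j \in \{0,1\}$. Each summand in the braced expression is then a non-negative number times a non-positive number, so $\frac{d^2 g(x)}{dx^2} \leq 0$, i.e., $g$ is concave on $[0,1]$.

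Next I would apply the identical argument to \eqref{Eqn: h(y)} to conclude $\frac{d^2 h(y)}{dy^2} \leq 0$, so $h$ is concave as a function of $y$. Since $y$ is a concave-increasing function of $x$ (as established in Section~\ref{sec: Prob-Setup}), the composition $h(y(x))$ is concave in $x$ by the standard rule that a concave nondecreasing function of a concave function is concave. Finally, $f(x,y(x)) = g(x) + h(y(x))$ is a sum of two concave functions and is therefore concave in $x$, which is exactly the claim.

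The only delicate step is the composition: to invoke the standard rule one needs $h$ to be nondecreasing in $y$, which is not automatic from concavity alone. I expect this to be the main obstacle. If nondecreasingness of $h$ cannot simply be asserted (as the earlier lemmas implicitly do), I would instead carry out a direct chain-rule computation,
\begin{equation*}
	\frac{d^2}{dx^2} h(y(x)) = h''(y(x))\,(y'(x))^2 + h'(y(x))\,y''(x),
\end{equation*}
and check the signs term by term using $h''\leq 0$, $y''\leq 0$, and the sign of $h'$ implied by the monotonicity of $w$ together with $v(c_{01}), v(c_{11}) \leq 0$. Everything else in the argument is the same mechanical sign-tracking that appeared in Lemmas~\ref{Lemma: f-optimistic-case1} and \ref{Lemma: f-pessimistic-case1}, so no new ideas beyond the composition handling are needed.
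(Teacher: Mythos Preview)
Your proposal is correct and follows exactly the route the paper intends: the paper's own proof is simply ``similar to that of Lemma~\ref{Lemma: f-optimistic-case1}, and is therefore omitted,'' i.e., the same second-derivative sign-tracking you carry out with the Type-2 sign reversal on $v(c_{ij})$. The monotonicity-of-$h$ subtlety you flag in the composition step is a point the paper's analogous proofs (Lemmas~\ref{Lemma: f-optimistic-case1} and \ref{Lemma: f-pessimistic-case1}) also leave implicit, so your treatment is, if anything, more careful than the original.
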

\begin{proof}
	Proof is similar to that of Lemma \ref{Lemma: f-optimistic-case1}, and is therefore, omitted for the sake of brevity.
\end{proof}

Given that the behavioral agent wishes to minimize its risk $f(x,y(x))$, the optimal decision rule lies on the extreme point of the ROC curve, i.e., $(x,y) = (0,0)$ or $(x,y) = (1,1)$. Furthermore, if $c_{00} = c_{11} = c_L \leq c_U = c_{01} = c_{10}$, we have
\begin{subequations}
\begin{equation}
	f(0,0) = w(\pi_0) v(c_L) + w(\pi_1) v(c_U),
\end{equation}
\begin{equation}
	f(1,1) = w(\pi_0) v(c_U) + w(\pi_1) v(c_L).
\end{equation}
\end{subequations}
As a result, if $\pi_0 \geq \displaystyle \frac{1}{2}$, we have $w(\pi_0) \geq w(\pi_1)$. So, we have
\begin{equation}
	f(1,1) - f(0,0) = \left[ w(\pi_0) - w(\pi_1) \right] \cdot \left[ v(c_U) - v(c_L) \right] \geq 0.
\end{equation}
In other words, the behavioral agent chooses the operating point $(1,1)$ in the ROC, which is equivalent to a decision rule where $\mathcal{R} = \mathbb{R}$, i.e., the behavioral agent always decides $u = 1$.

On the other hand, if $\pi_0 < \displaystyle \frac{1}{2}$, we have 
\begin{equation}
	f(1,1) - f(0,0) = \left[ w(\pi_0) - w(\pi_1) \right] \cdot \left[ v(c_U) - v(c_L) \right] \leq 0.
\end{equation}
As a result, the behavioral agent adopts the operating point $(0,0)$. This is equivalent to the case where the behavioral agent always decides $u = 0$. 

In summary, we have the following theorem.
\begin{thrm}
	If $c^* \geq \min \{ c_{00}, c_{01}, c_{10}, c_{11} \}$, a pessimist minimizes its behavioral risk by either always deciding $u = 0$, or $u = 1$, for any observation $x \in \mathbb{R}$.
	
	Furthermore, if $c_{00} = c_{11} = c_L \leq c_U = c_{01} = c_{10}$, then a Type-2 pessimist employs the following decision rule.
	\begin{equation}
		u = 
		\begin{cases}
			1, \mbox{ if } \pi_0 \geq \displaystyle \frac{1}{2}
			\\
			0, \mbox{ otherwise.}
		\end{cases}
	\end{equation}
	\label{Thrm: Type-3-Pessimists}
\end{thrm}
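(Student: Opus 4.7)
The plan is to mirror the proof of Theorem \ref{Thrm: Type-1-optimists} almost verbatim, since Lemma \ref{Lemma: f-pessimistic-case3} establishes for Type-2 pessimists exactly the same concavity property on $[0,1]$ that Lemma \ref{Lemma: f-optimistic-case1} provided for Type-1 optimists. The argument proceeds in three short moves.

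First, I would invoke Lemma \ref{Lemma: f-pessimistic-case3} to conclude that $f(x, y(x))$ is concave on the compact interval $[0,1]$. Because the minimum of a concave function on a closed interval is attained at one of its endpoints, the optimal operating point must be either $(x,y) = (0,0)$ or $(x,y) = (1,1)$ (using the boundary values $y(0)=0$ and $y(1)=1$ stipulated in Section \ref{sec: Prob-Setup}). These correspond to the data-independent rules ``always decide $u = 0$'' ($\mathcal{R} = \emptyset$) and ``always decide $u = 1$'' ($\mathcal{R} = \mathbb{R}$), which establishes the first claim of the theorem.

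Second, under the symmetric cost structure $c_{00} = c_{11} = c_L$ and $c_{01} = c_{10} = c_U$, I would plug $(0,0)$ and $(1,1)$ into Equations \eqref{Eqn: g(x)} and \eqref{Eqn: h(y)}. Using the standard normalization $w(0) = 0$ already implicit in the proof of Theorem \ref{Thrm: Type-1-optimists}, this yields $f(0,0) = w(\pi_0) v(c_L) + w(\pi_1) v(c_U)$ and $f(1,1) = w(\pi_0) v(c_U) + w(\pi_1) v(c_L)$, whose difference factors as $f(1,1) - f(0,0) = [w(\pi_0) - w(\pi_1)] \cdot [v(c_U) - v(c_L)]$.

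Third, I would sign-check the two factors: monotonicity of $v$ (Property \ref{Assumption: Value-Function}) together with $c_L \leq c_U$ gives $v(c_U) - v(c_L) \geq 0$, while monotonicity of any admissible weight function together with $\pi_0 + \pi_1 = 1$ pins the sign of $w(\pi_0) - w(\pi_1)$ to whether $\pi_0 \geq 1/2$. Splitting into the two cases and comparing $f(0,0)$ with $f(1,1)$ --- verbatim as in the closing paragraph of the proof of Theorem \ref{Thrm: Type-1-optimists} --- selects the endpoint that matches the stated decision rule. I do not anticipate a real obstacle here: the only bit of substantive content beyond transcription is recognizing that it is the \emph{minimum} (rather than the maximum) of a concave function on a compact interval that is attained at an endpoint, which is what allows the data-dependent likelihood-ratio region to collapse to one of the two trivial rules.
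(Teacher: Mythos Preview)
Your proposal is correct and mirrors the paper's own argument essentially line for line: invoke Lemma \ref{Lemma: f-pessimistic-case3} for concavity, deduce that the minimum occurs at an endpoint of the ROC, evaluate $f(0,0)$ and $f(1,1)$ under the symmetric cost assumption, and factor the difference to read off the sign from the monotonicity of $w$ and $v$. There is nothing to add or change.
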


In summary, even though a Type-2 pessimist finds all the decision costs to be profitable, the agent still chooses a data-independent decision rule. Particularly, when $c_{00} = c_{11} = c_L \leq c_U = c_{01} = c_{10}$, the Type-2 pessimist chooses a decision that is antipodal to the prior information, just as in the case of a Type-1 optimist.

\section{Illustrative Example \label{sec: Example}}

\begin{figure*}[!t]
	\centerline
    {
    	\subfloat[$\pi_0 = 0.25$]{\includegraphics[width=0.49\textwidth]{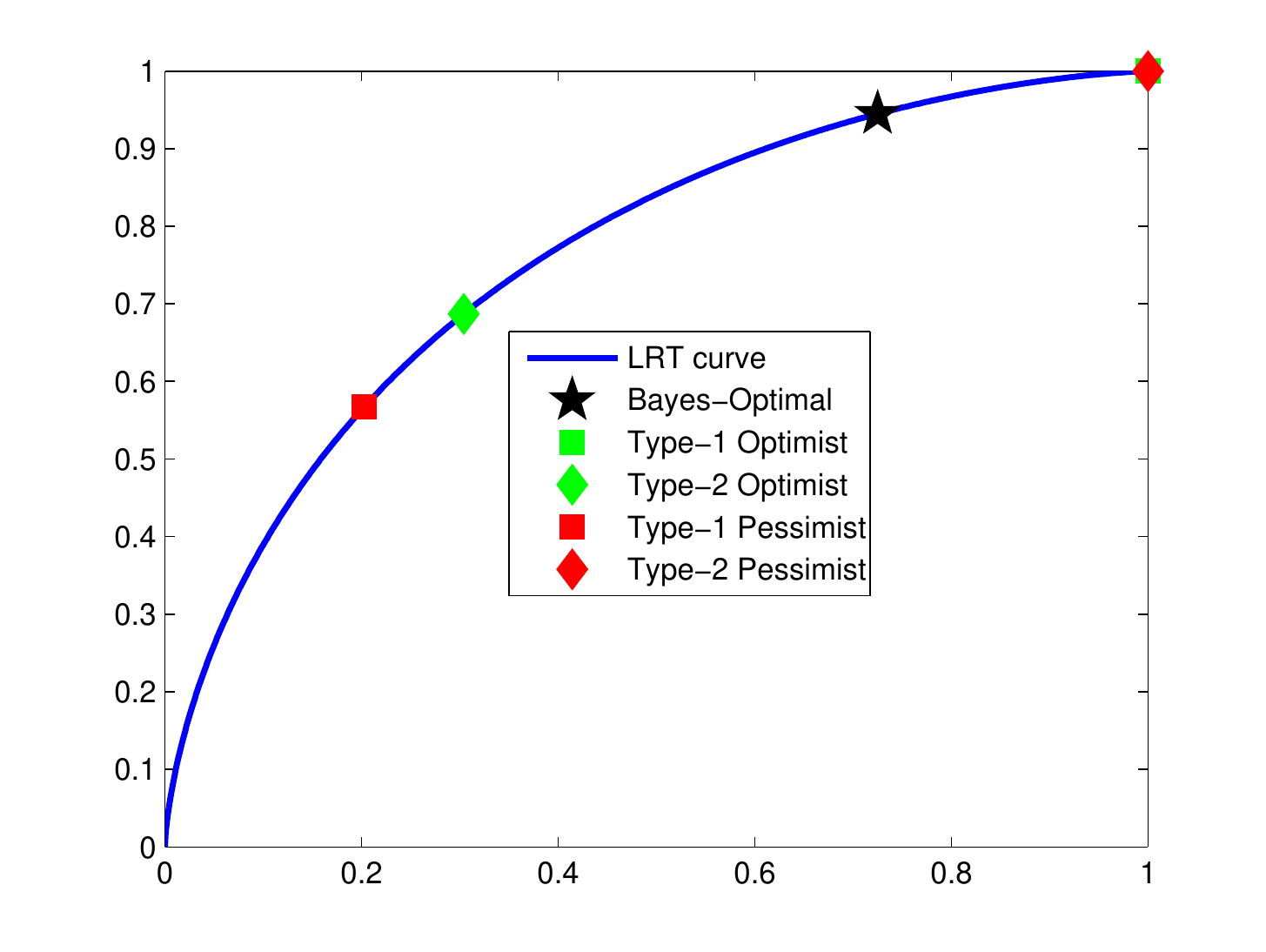}%
        \label{Fig: Low-pi0-result}}
        \hfil
        \subfloat[$\pi_0 = 0.75$]{\includegraphics[width=0.49\textwidth]{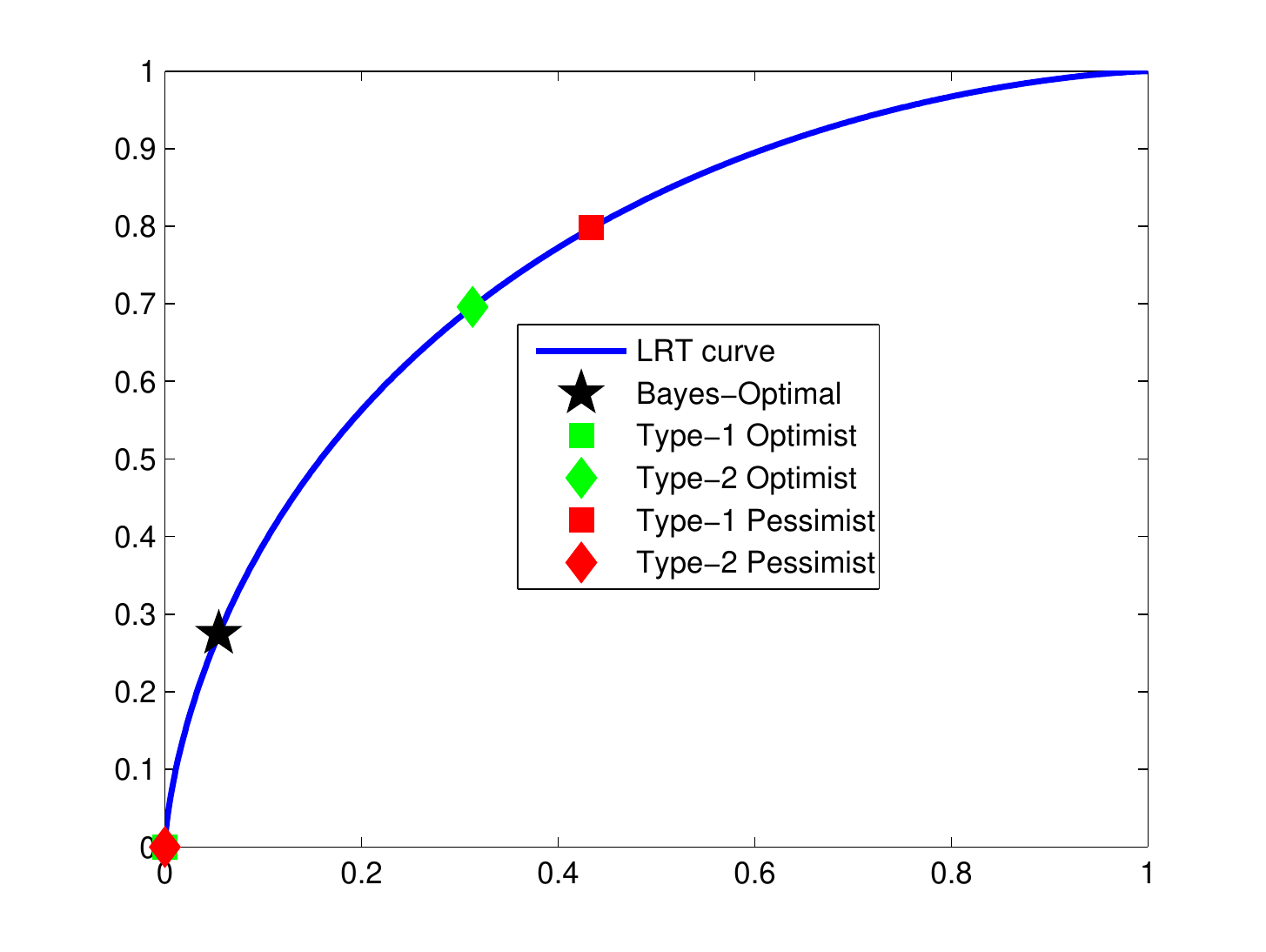}%
        \label{Fig: High-pi0-result}}
    }
    \caption{Optimal decision rules employed by the human agent under different prior probabilities}
    \label{Fig: Results}
\end{figure*}

In this section, we consider a simple example where the observation $r$ follows a normal distribution with mean $\theta$ and unit variance, where $\theta$ denotes the state of the true hypothesis. We assume that the true decision costs incurred by the human agent are given by $c_{00} = c_{11} = c_L = -1$ and $c_{01} = c_{10} = c_U = 1$. In such a case, the optimal Bayesian detection rule is given by
\begin{equation}
	u = 
	\begin{cases}
		1; \quad \mbox{if } x \geq \displaystyle \frac{\pi_0}{\pi_1}
		\\
		0; \quad \mbox{otherwise},
	\end{cases}
	\label{Eqn: Bayesian Decision Rule - example}
\end{equation}
and the corresponding optimal Bayes risk is given by
\begin{equation}
\begin{array}{lcl}
	\mathcal{B}(x,y) & = & \displaystyle \sum_{i = 0}^1 \sum_{j = 0}^1 Pr(u = i, \theta = j) \cdot c_{ij}
	\\[3ex]
	& = & \pi_0 [x \cdot 1 + (1-x) \cdot (-1)] 
	\\
	&& \qquad + \pi_1 [(1-y) \cdot 1 + y \cdot (-1)]
	\\[2ex]
	& = & \pi_0 (2x-1) + \pi_1 (1-2y),
\end{array}
	\label{Eqn: Risk-Bayes-def}
\end{equation}
where $x = Q \left( \displaystyle \frac{\pi_0}{\pi_1} \right)$ and $y = Q \left( \displaystyle \frac{\pi_0}{\pi_1} - \theta \right)$. 

To illustrate our results obtained in Sections \ref{sec: Optimist-Decision-Rules} and \ref{sec: Pessimist-Decision-Rules}, we consider the following model for a human decision-maker and present optimal decision rules for both Type-1 and Type-2 optimists and pessimists. 
\begin{subequations}
\begin{gather}
	w(p) = p^\alpha, \quad \forall \ p \in [0,1] \mbox{ and } \alpha > 0,
	\\[2ex]
	v(c) = e^c - e^{c^*}, \quad \forall \ c \in \mathbb{R},
\end{gather}
\end{subequations}
where $\alpha$ and $c^*$ are the agent's behavioral parameters. More specifically, $\alpha > 1$ if the agent is a pessimist, $\alpha = 1$ if the agent is unbiased, and $0 < \alpha < 1$ if the agent is an optimist. Similarly, $c^*$ is the reference cost of the human decision-maker, as discussed in Section \ref{sec: Prospect Theory}. Assuming that the human agent employs a likelihood-ratio test, we have the following ROC curve.
\begin{equation}
	y = Q \left( Q^{-1}(x) - \theta \right).
\end{equation}

Using this example, we present the operating points adopted by both optimists and pessimists in Figure \ref{Fig: Results}, by obtaining the optimal rules using the gradient descent algorithm. In order to illustrate the results for both Type-1 and Type-2 agents, we present numerical results for two scenarios: (i) $\pi_0 = 0.25$ and (ii) $\pi_0 = 0.75$. Note that, in the first scenario when $\pi_0 = 0.25$, the optimal operating points for both Type-1 optimist and Type-2 pessimist lie at $(1,1)$, which corroborates our theoretical analysis. We also observe a similar behavior with both Type-1 optimists and Type-2 pessimists when $\pi_0 = 0.75$, where the operating points lie at $(0,0)$. In contrast, in the case of Type-2 optimists and Type-1 pessimists, we observe that the optimal decision rules from a behavioral perspective lie at different operating points, and also deviate from the Bayesian detector. Furthermore, since Bayesian decision rules are not necessarily optimal from a behavioral perspective, our results illustrate how prospect theory based decision rules deviate from the Bayesian decision rule.


\section{Conclusion and Future Work \label{sec: Conclusion}}
In this paper, we investigated optimal binary hypothesis testing rules employed by two types of prospect theory based optimists and pessimists. We found that the optimal decision rule employed by an optimist or a pessimist can significantly deviate from the rule which is designed to minimize the Bayes risk. In the future, we will analyze other types of agents and obtain their corresponding optimal decision rules. Note that, in the real world, a typical human agent is neither an optimist, nor a pessimist, and is known to exhibit more complex behavior. Therefore, we will consider behavioral models beyond optimists/pessimists that closely mimic human agents and find optimal decision rules employed by such human agents. 
    
\bibliographystyle{IEEEtran}
\bibliography{IEEEabrv,references}

\end{document}